\newtheorem{lemma}{Lemma}
\newtheorem{proof}{Proof}
\def\BibTeX{{\rm B\kern-.05em{\sc i\kern-.025em b}\kern-.08em
    T\kern-.1667em\lower.7ex\hbox{E}\kern-.125emX}}
\begin{document}

\title{Deep Graph Attention Networks
}

\author{\IEEEauthorblockN{Blind Review}
\author{\IEEEauthorblockN{Jun Kato}
\IEEEauthorblockA{\textit{ Kwansei Gakuin University} \\
\textit{Graduate School of Science and Technology}\\
Sanda, Japan \\
gpn06467@kwansei.ac.jp}
\and
\IEEEauthorblockN{Airi Mita, Keita Gobara, Akihiro Inokuchi}
\IEEEauthorblockA{\textit{ Kwansei Gakuin University} \\
\textit{School of Science and Technology}\\
Sanda, Japan \\
inokuchi@kwansei.ac.jp}
}
}

\maketitle

\begin{abstract}
Graphs are useful for representing various real-world objects. However, 
graph neural networks (GNNs) tend to suffer from over-smoothing, where the representations of nodes of different classes become similar as the number of layers increases, leading to performance degradation. 
A method that does not require protracted tuning of the number of layers is needed to effectively construct a graph attention network (GAT), a type of GNN.
Therefore, we introduce a method called ``DeepGAT'' for predicting the class to which nodes belong in a deep GAT.
It avoids over-smoothing in a GAT by ensuring that nodes in different classes are not similar at each layer. 
Using DeepGAT to predict class labels, a 15-layer network is constructed without the need to tune the number of layers. 
DeepGAT prevented over-smoothing and achieved a 15-layer GAT with similar performance to a 2-layer GAT, as indicated by the similar attention coefficients.
DeepGAT enables the training of a large network to acquire similar attention coefficients to a network with few layers. 
It avoids the over-smoothing problem and obviates the need to tune the number of layers, thus saving time and enhancing GNN performance.
\end{abstract}

\begin{IEEEkeywords}
Graph Neural Network, Graph Attention Network, Reproductive Property of Probability Distribution
\end{IEEEkeywords}

\section{Introduction}
The graph is a useful data structure to represent various objects in the real world. For example, accounts and links in social networks correspond to nodes and edges in graphs, respectively, and relationships in the networks are represented as graphs. Atoms and chemical bonds in molecules also correspond to nodes and edges in graphs, respectively, and the molecules are thus represented as graphs. Various other objects, such as hyperlink structures and function calls in computer programs, can also be represented by graphs. Because various objects can be stored as graph data, technologies for analyzing such data have been attracting a great deal of attention in recent years~\cite{kipf2016semi,xu2018powerful,velivckovic2017graph}.

The graph neural network (GNN)~\cite{xu2018powerful} is a representative deep learning method for graph data. GNNs convolve features of neighboring nodes of a node $v$ to the representation of $v$. GNNs with $L$ layers repeat this convolution $L$ times, and obtain the representation $\bm{h}_v$ of $v$. Once each node is represented in vector form, various conventional machine learning methods such as classification, clustering, and regression can be applied to the graph data, including for node classification, node clustering, and link prediction.

Representative GNNs include the graph convolutional network (GCN)~\cite{kipf2016semi} and graph attention network (GAT)~\cite{velivckovic2017graph}. 
Figure~\ref{convolution} shows the difference between convolutions of GCN and GAT; the features of nodes in the darker blue background greatly affect the representation of $v$. 
For each node $v$, GCN uniformly convolves the features with $l$ steps from $v$ to the representation of $v$, whereas GAT non-uniformly convolves the features to the representation of $v$ using the attention coefficient. Thus, GCN convolves the features of red nodes in Fig.~\ref{convolution} to the representation of $v$, whereas GAT does not always convolve the features to the representation of $v$, which results in different representations of the structure around $v$ learned by GCNs and GATs.
\begin{figure*}[tb]
	\begin{center}
		\includegraphics[width=150mm]{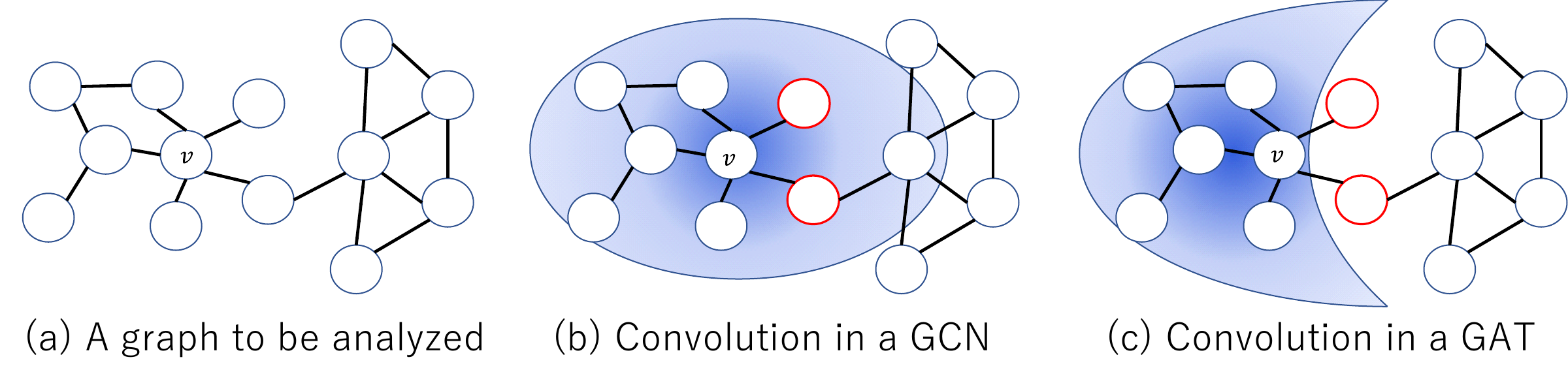}
	\end{center}
        \label{convolution}
	\caption{Convolutions in graph neural networks.}
\end{figure*}

Most GNNs, including GCNs and GATs, have a common drawback called over-smoothing~\cite{Oversmoothing1,Oversmoothing2,Oversmoothing3,Oversmoothing4}, which is a phenomenon in which the representations of nodes belonging to different classes are similar to each other when the number of layers is increased. This results in degradation of GNN performance.
For this reason, low-layer GCNs or GATs with 2-3 layers are often used in actual operations. However, for some graph datasets, even a deep GAT can provide high performance, so we must tune the number of layers $L$ by starting with a low-layer GAT and gradually increasing $L$. This tuning requires considerable time.
Therefore, a methodology that can construct a deep GAT for many datasets without spending time tuning $L$ would be useful.

To achieve this, this paper~\cite{kato-GCA} proposes a method for predicting which class each node belongs to at each layer in a deep GAT, in which the prediction error at each layer is incorporated into the loss function. 
On the basis of the prediction, if a node $v$ has neighbors that belong to the same class as $v$, only features of the neighbors are convolved into the representation of $v$.
Unlike the case of over-smoothing, the likelihood that features of nodes belonging to a different class from $v$ are convolved into $v$ is small, and thus the representations of nodes belonging to different classes are not similar to each other; therefore, over-smoothing can be avoided.

The contributions of this paper are summarized as follows: 
\begin{itemize}
\item 
First, our proposed method, called DeepGAT, achieves a 15-layer GAT with the same level of performance as a 2-layer GAT. Instead of starting with a small $L$ and gradually increasing its value, the DeepGAT with 15 layers can be constructed without tuning $L$.
\item 
Second, we mathematically show why DeepGAT can avoid over-smoothing, on the basis of the reproductive property of the probability distribution (see Lemma~\ref{corollary1}).
We also show mathematically that the overlap in distributions of representations of nodes belonging to different classes is small, and we demonstrate that this is also true for benchmark datasets.
\end{itemize}

\section{Preliminaries}
A graph whose nodes have features is represented as $G=(V,E,X)$, where $V=\{1,2,\ldots, n\}$ is a set of nodes, $E \subseteq V\times V$ is a set of edges, and $X=\{\bm{x}_1, \bm{x}_2,\ldots,\bm{x}_n\} \in \mathbb{R}^{d \times n}$ is a set of feature vectors. We define a neighborhood of vertex $v$ as $N_v=\{u \mid (v,u)\in E\} \cup \{v\}$. Each node belongs to one of two classes $C=\{0,1\}$\footnote{In this paper, we address the binary classification of nodes, although the proposed method can be extended to multi-class node classification, multi-label node classification, and graph classification.}, and we denote nodes belonging to the class $c\in C$ as $V_c \subset V$. In addition, we assume that feature $\bm{x}_v$ of node $v\in V_c$ has the following properties: $\mathcal{N}(\bm{\mu}_c,\Sigma_c)$.

We call a sequence of edges $\langle (v_1,v'_1),(v_2,v'_2),\ldots,(v_l,v'_l)\rangle$ satisfying $v'_i=v_{i+1}$ a path, and $l$ the length of the path. Paths in this paper are not always simple, and we allow the path to satisfy $v_i=v'_i$. We denote a set of paths of length $l$ from node $v$ to node $u$ as $P_{v \sim u}^l$. In addition, we denote a vector representing the numbers of paths of length $l$ from $v$ as $\bm{\pi}_v^l=\sum_{u \in V} |P_{v \sim u}^l|\bm{1}_{n,u}=(|P_{v \sim 1}^l|,|P_{v \sim 2}^l|,\ldots,|P_{v \sim n}^l|)^T$, where $\bm{1}_{n,u}$ is the $n$-dimensional one-hot vector whose $u$-th element is 1. By using these notations, we define $\bm{\pi}_{v,c}^l=\sum_{u \in V_c} |P_{v \sim u}^l|\bm{1}_{n,u}$ and $P_{v}^l = \bigcup _{c \in C} \bigcup _{u\in V_c}P_{v \sim u}^l =\bigcup _{c \in C}P_{v,c}^l$.

Given a graph $G$ and the ground-truth class labels $C'$ for nodes $V' \subset V$ as input, the problem addressed in this paper is the accurate prediction of class labels for nodes $V\setminus V'$ via learning node representations for $V$ from $G$ and $C'$.
\section{Related Works and their Drawbacks}
We will now discuss GAT, which is a type of GNN in which node representations are obtained from a graph. For each node $v$ in a graph, GAT convolves the features of nodes $N_v$ to the feature of $v$. The convolution of GAT consists of {\it Aggregate} and {\it Combine}, as shown below:
\begin{eqnarray}
\bm{a}_v^l &=& {\it Aggregate}^l( \{ \bm{h}_u^{l-1} ~|~ u \in N_v \} ) =\sum_{u \in N_v} \alpha_{vu}\bm{h}_u^{l-1}, \label{agg} \\
\bm{h}_v^l &=& {\it Combine}^l( \bm{h}_v^{l-1}, \bm{a}_v^l ) = \sigma(W^l \bm{a}_v^l), \label{comb} 
\end{eqnarray}
where $W^l$ is the learnable matrix and $\sigma$ is the activation function. In addition, $\alpha_{vu}$ is the attention coefficient computed from $\bm{h}_v^{l-1}$ and $\bm{h}_u^{l-1}$. Starting from $\bm{h}_v^0 =\bm{x}_v $, GAT repeats this convolution $L$ times and then outputs $\bm{h}_v^L$ as the representation for node $v$~\footnote{Although GATs usually use multi-head attention, we discuss GATs with single-head attention until Section~4 of this paper. The software used in Section~5 is implemented with $K$ multi-head attention.}.

GAT is an extension of GCN. $\alpha^l_{vu}$ in GCN does not depend on $\bm{h}_v^l$ and $\bm{h}_u^l$, and $\alpha^l_{vu}$ is $\frac{1}{|N_v|}$. In contrast, GAT aggregates features of neighboring nodes with similar features to node $v$ into a representation of $v$. Thus, it does not convolve uniform features of nodes with $l$ steps from $v$, but instead convolves while selecting features of nodes in particular directions, as shown in Fig.~\ref{convolution}.
The following equations are used for computing attention coefficients between two vectors $\bm{q}$ and $\bm{k}$:
	\begin{eqnarray}
		{\it AD}(\bm{q},\bm{k})&=&\langle \bm{w} , \bm{q} || \bm{k} \rangle ~~~\mbox{(additive attention~\cite{AD-Attention1,velivckovic2017graph,Kato1-Wang})}  \nonumber  \\
		{\it DP}(\bm{q},\bm{k})&=&\langle \bm{q} , \bm{k} \rangle   ~~~\mbox{(dot-product attention~\cite{AttentionVaswani,DP-Attention})},  \nonumber  \\
		{\it SD}(\bm{q},\bm{k})&=&\frac{\langle \bm{q} , \bm{k} \rangle }{dim(\bm{q})}  ~~~\mbox{(scaled dot-product attention)},  \nonumber  
	\end{eqnarray}
where $\bm{w}$ is a learnable vector, $dim$ is a function to obtain the number of dimensions of a vector, $\langle \cdot, \cdot \rangle$ is a dot product, and $||$ is an operation to concatenate two vectors. Using one of these attention functions, $\alpha_{vu}$ is computed by
	\begin{eqnarray}
		\overline{\alpha}_{vu}^l	&=& 	attention(W^l\bm{h}_v^l, W^l\bm{h}_u^l)   \nonumber  \\
		\alpha_{vu}^l			&=& 	\frac{\exp(\overline{\alpha}_{vu})}{\sum_{w \in N_v}\exp(\overline{\alpha}_{vw}) } ~~~\mbox{(Softmax)}.  \nonumber 
	\end{eqnarray}
 
GAT has been actively applied to various methods.
For example, the gated attention network~\cite{Kato2-Zhang} calculates importance values of $K$ multi-heads.
Heterogenous GANs~\cite{Kato1-Wang} can learn heterogeneous graphs with GAT, and 
the signed GAN~\cite{Kato4-Huang} combines directed code networks and GAT.
The relational GAN~\cite{Kato3-Wang} adds new attention networks that can learn the dependencies between emotional polarity and opinion words, called aspects, in sentences. 

One of the major challenges for GNNs, including GATs, is over-smoothing. Over-smoothing is a phenomenon in which node representations $\bm{h}_v^L=\bm{h}_v$ are similar to each other when the number of layers $L$ in the GNN is increased, resulting in a decrease in classification accuracy.
Yajima and Inokuchi proved with the following lemma that the distribution followed by $\bm{h}_v^l$ in a GCN is normal, using the reproductive property of probability distributions.
\begin{lemma}[\cite{yajima}] \label{lemma:gcn-reproductive}
We assume that $\sigma$ in a GCN is the identity function. The representations $\bm{h}^l_v$ in the GCN follow the normal distribution defined below.
\begin{eqnarray}
&&\mathcal{N}\left(\frac{|P_{v,0}^l|}{|P_{v}^l|}W_{1\sim l}\bm{\mu}_0+\frac{|P_{v,1}^l|}{|P_{v}^l|}W_{1\sim l}\bm{\mu}_1, \right. \nonumber \\
&&\left. \frac{\|\bm{\pi}_{v,0}^l\|_2^2}{|P_v^l|^2}W_{1\sim l}^T \Sigma_0 W_{1\sim l}+\frac{\|\bm{\pi}_{v,1}^l\|_2^2}{|P_v^l|^2}W_{1\sim l}^T\Sigma_1 W_{1\sim l} \right),
\nonumber 
\end{eqnarray}
where $W_{1 \sim l}=W^lW^{l-1}\cdots W^1$. $\blacksquare$
\end{lemma}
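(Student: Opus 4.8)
The plan is to exploit the fact that, with $\sigma$ equal to the identity, every operation in the GCN is linear, so that $\bm{h}_v^l$ is a fixed deterministic linear image of the input features $\{\bm{x}_u\}_{u \in V}$; the conclusion then follows from the reproductive property of the Gaussian family under independent linear combinations. First I would unroll the recursion $\bm{h}_v^l = W^l \sum_{u \in N_v} \alpha_{vu} \bm{h}_u^{l-1}$ with $\alpha_{vu} = 1/|N_v|$. Because the activation is trivial, each of the $l$ layers contributes exactly one matrix factor and these factors are common to every summand, so pulling them out in order gives $\bm{h}_v^l = W_{1 \sim l} \sum_{u \in V} c_{vu}^l \, \bm{x}_u$, where $W_{1 \sim l} = W^l \cdots W^1$ and $c_{vu}^l$ collects the aggregation weights accumulated along every length-$l$ walk from $v$ to $u$.

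The second step is to identify these coefficients with normalized path counts. I would show by induction on $l$ that $\sum_{u \in V} c_{vu}^l = 1$, since the per-layer weights $1/|N_v|$ sum to one over $N_v$, and that, after grouping endpoints by class, $\sum_{u \in V_c} c_{vu}^l = |P_{v,c}^l|/|P_v^l|$ while $\sum_{u \in V_c} (c_{vu}^l)^2 = \|\bm{\pi}_{v,c}^l\|_2^2/|P_v^l|^2$. The latter is immediate once $c_{vu}^l = |P_{v \sim u}^l|/|P_v^l|$ is established, because $\bm{\pi}_{v,c}^l$ has entries $|P_{v \sim u}^l|$ on the class-$c$ endpoints and zero elsewhere, so $\|\bm{\pi}_{v,c}^l\|_2^2 = \sum_{u \in V_c} |P_{v \sim u}^l|^2$. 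This bookkeeping is the delicate part, since it requires $c_{vu}^l$ to be proportional to $|P_{v \sim u}^l|$, i.e.\ each length-$l$ path to carry equal weight $1/|P_v^l|$; this is exactly where the uniform neighborhood normalization (equivalently, a regularity assumption on the degrees) is used, and I would flag it as the main obstacle.

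With the coefficients in hand, the third step applies the reproductive property. Assuming the features are independent across nodes with $\bm{x}_u \sim \mathcal{N}(\bm{\mu}_c, \Sigma_c)$ for $u \in V_c$, the weighted sum $\sum_{u \in V} c_{vu}^l \bm{x}_u$ is again Gaussian, with mean $\sum_{c \in C} \left( \sum_{u \in V_c} c_{vu}^l \right) \bm{\mu}_c$ and covariance $\sum_{c \in C} \left( \sum_{u \in V_c} (c_{vu}^l)^2 \right) \Sigma_c$; substituting the two identities from the previous step produces $\sum_{c} \frac{|P_{v,c}^l|}{|P_v^l|} \bm{\mu}_c$ and $\sum_{c} \frac{\|\bm{\pi}_{v,c}^l\|_2^2}{|P_v^l|^2} \Sigma_c$. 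Finally I would push this Gaussian through the linear map $W_{1 \sim l}$ using the standard rule that sends $\mathcal{N}(\bm{m}, S)$ to a Gaussian with transformed mean $W_{1 \sim l} \bm{m}$ and the correspondingly conjugated covariance, thereby recovering the stated distribution.

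Besides the coefficient computation, the one assumption that must be made fully explicit is the cross-node independence of the features: without it the covariance of $\sum_u c_{vu}^l \bm{x}_u$ would acquire cross terms and the clean sum-of-two-covariances form displayed in the statement would fail. I therefore expect the proof to be short once these two points—the proportionality $c_{vu}^l \propto |P_{v \sim u}^l|$ and the independence of the $\bm{x}_u$—are secured, with everything else being the routine mean/covariance algebra above.
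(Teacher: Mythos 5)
Your proposal is correct, and in substance it is the same argument the paper relies on: note that the paper itself states Lemma~\ref{lemma:gcn-reproductive} without proof (it is imported from Yajima and Inokuchi), but its in-house proof of Lemma~\ref{corollary1} uses precisely the ingredients you name --- linearity under the identity activation, path counting, and the reproductive property of the Gaussian family. Where you genuinely differ is in the order of operations: the paper's style (in the Lemma~\ref{corollary1} proof) applies the rule ``$a\bm{q}+b\bm{q}'\sim\mathcal{N}((a+b)\bm{\mu},(a^2+b^2)\Sigma)$'' layer by layer to the intermediate representations, which is loose because the $\bm{h}_u^{l-1}$ for different $u$ share input features through overlapping receptive fields and so are not independent; you instead unroll the whole network to the inputs first and invoke Gaussian closure exactly once, on the genuinely independent $\bm{x}_u$. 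Your endpoint-grouping identity $\|\bm{\pi}_{v,c}^l\|_2^2=\sum_{u\in V_c}|P_{v\sim u}^l|^2$ is the correct bookkeeping for the fact that multiple paths reaching the same node carry the \emph{same} random feature (weights add before squaring), a point the per-layer induction glosses over. Your two flagged hypotheses are also real and implicit in the lemma as stated: cross-node independence of the $\bm{x}_u$, and equal weight per path --- since the true unrolled coefficient of a walk is $\prod_i 1/|N_{v_i}|$, the identification $c_{vu}^l=|P_{v\sim u}^l|/|P_v^l|$ requires a degree-regularity assumption, without which the mean is a random-walk mixture rather than the path-count ratio $|P_{v,c}^l|/|P_v^l|$. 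One cosmetic point you silently normalized: the lemma prints the covariance as $W_{1\sim l}^T\Sigma_c W_{1\sim l}$, whereas the standard transformation rule you cite yields $W_{1\sim l}\Sigma_c W_{1\sim l}^T$; this is a transposition convention (or typo) in the statement, not an error in your argument.
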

According to Lemma~\ref{lemma:gcn-reproductive}, the mean $\bm{\mu}_L$ of the distribution followed by $\bm{h}_v=\bm{h}_v^L$ to be used for classification is $\frac{|P_{v,0}^L|\bm{\mu}_0+|P_{v,1}^L|\bm{\mu}_1}{|P_{v,0}^L|+|P_{v,1}^L|}W_{1\sim L} $. 
For large $L$, because the ratio $\frac{|P_{v,0}^L|}{|P_{v,1}^L|}$ approaches the class ratio $\frac{|V_0|}{|V_1|}$, $\bm{\mu}_L$ internally divides $\bm{\mu}_0$ and $\bm{\mu}_1$ in the ratio.
Because the representations of nodes belonging to the two classes follow a distribution with an identical mean $\bm{\mu}_L$ for large $L$, they are similar to each other.
This also holds true for GATs. Therefore, GCNs and GATs often have two or three layers.

The objective of this paper is to extend GAT, i.e., to derive a multiple-layer GAT to achieve high classification accuracy while preventing over-smoothing, by taking advantage of the properties of the representations shown in Lemma~\ref{lemma:gcn-reproductive}.

According to the article~\cite{G2Survey}, there are three main approaches to reduce over-smoothing: Normalization, Residual connection, and Change of GNN Dynamics.
Typical methods of the first approach are DropEdge~\cite{DropEdge} and PairNorm~\cite{PairNorm}.  DropEdge randomly removes a certain number of edges from the input graph at each training epoch. PairNorm transfoms input features in prepossessing as follows. 
\begin{eqnarray}
\hat{\bm{x}}_v&=&\bm{x}_v-\frac{1}{|V|}\sum_{u \in V}\bm{x}_u, \nonumber \\
\bm{x}_v&=&\frac{s\hat{\bm{x}}_v}{\sqrt{\frac{1}{|V|}\sum_{u \in V}||\hat{\bm{x}}_u||_2^2}}, \nonumber 
\end{eqnarray}
where $s>0$ is the hyper-parameter. 
In the second approach,
the residual connection~\cite{ResNet, ResidualGCN} allows the input to bypass one or more layers and be added directly to the output of a later layer. One of implementations is represented as $\bm{x}_v^l=\bm{x}_v^{l-1}+\sigma(W^l\bm{a}_v^l)$. This technique helps mitigate the vanishing gradient problem, enabling the training of deeper networks. GCNII~\cite{GCNII} using residual connection demonstrated that the over-smoothing is reduced in 64-layer GNNs.
The third approach qualitatively changes the dynamics of message passing propagation. 
For example, in Gradient Gating (G$^2$)~\cite{G2}, local gradients due to differences between features are harnessed to further modulate convolution as follows.
\begin{eqnarray}
\hat{\bm{\tau}}_v^l&=&\sigma(W^l\bm{a}_v^l), \nonumber \\
\tau_{vk}^l&=&\tanh\left(\sum_{u \in N_v}|\hat{\tau}_{v}^l-\hat{\tau}_{u}^l|^p\right), \nonumber \\
\bm{x}_v^l&=&(1-\bm{\tau}^l_v)\odot\bm{x}_v^{l-1}+\bm{\tau}^l_v\odot\sigma(W^l\bm{a}_v^l), \nonumber 
\end{eqnarray}
where $p\le 0$ is the hyper-parameter and $\odot$ is the Hadamard product.

\section{DeepGAT}
\subsection{GAT with Oracle Predicting $\overline{\alpha}_{vu}$}
\label{subsec:4A}
In this subsection, we assume that there exists the following oracle $o$ that can output exactly whether two nodes $v$ and $u \in N_v$ belong to the same class.
\[
o(v,u) = \left\{
\begin{array}{ll}
1 & (\mbox{if nodes $v$ and $u$ belong to the same class})\\
0 & (\mbox{otherwise}).
\end{array}
\right.
\]
For node $v$, only features of nodes belonging to the same class as $v$ are convolved into the representation of $v$, when we use $\overline{\alpha}_{vu}^l=o(v,u)$. In this case, we obtain the following lemma.
\begin{lemma}
\label{corollary1}
Let paths consisting of only nodes belonging to the same class as $v$ among $P_{v \sim u}^l$ be $\check{P}_{v \sim u}^l$. We define 
$\check{\bm{\pi}}_{v,c}^l$ as $\sum_{u \in V_c} |\check{P}_{v \sim u}^l| \bm{1}_{n,u} $. When the activation functions are the identify functions, the representation $\bm{h}^l_v $ of $v$ belonging to the class $c$ in GAT with the oracle follows
\begin{eqnarray}
\mathcal{N}\left(W_{1\sim l}\bm{\mu}_c, \frac{\|\check{\bm{\pi}}_{v,c}^l\|_2^2}{\|\check{\bm{\pi}}_{v,c}^l\|_1^2}W_{1\sim l}^T\Sigma_c W_{1\sim l} \right).~\blacksquare
\label{eq:corollary1}
\end{eqnarray}
\end{lemma}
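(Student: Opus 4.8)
The plan is to mirror the proof of Lemma~\ref{lemma:gcn-reproductive} while exploiting the fact that the oracle annihilates every cross-class contribution, so that the two-class Gaussian mixture appearing in the GCN case collapses to a single-class Gaussian. Concretely, I would first observe that setting $\overline{\alpha}_{vu}^l = o(v,u)$ forces $\alpha_{vu}^l = 0$ whenever $u$ and $v$ lie in different classes, so the \emph{Aggregate} step at $v$ only ever combines features of neighbors in the same class as $v$.

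The first key step is an induction on $l$ to unfold the recursion and identify the coefficient multiplying each original feature. Since the activation functions are the identity, the \emph{Combine} step is linear and the weight matrices factor out as $W_{1\sim l}$, giving $\bm{h}_v^l = W_{1\sim l}\sum_{u} \beta_{vu}^l \bm{x}_u$, where $\beta_{vu}^l$ is the sum over length-$l$ paths from $v$ to $u$ of the products of the per-step attention coefficients. Here I would use transitivity of the ``same class'' relation: if every consecutive pair on a path is same-class, then every node on the path --- and in particular $u$ --- has the class $c$ of $v$. Hence $\beta_{vu}^l$ is supported exactly on the nodes reachable by the same-class paths $\check{P}_{v\sim u}^l$, and after the softmax normalization the surviving weights are the normalized same-class path counts $\beta_{vu}^l = |\check{P}_{v\sim u}^l| / \|\check{\bm{\pi}}_{v,c}^l\|_1$.

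The second step invokes the reproductive property of the normal distribution exactly as in Lemma~\ref{lemma:gcn-reproductive}. Because every contributing node $u$ lies in $V_c$, its feature is an independent draw from $\mathcal{N}(\bm{\mu}_c, \Sigma_c)$; since the weights $\beta_{vu}^l$ are nonnegative and sum to one, the weighted sum is Gaussian with mean $\bm{\mu}_c$ and covariance $\left(\sum_u (\beta_{vu}^l)^2\right)\Sigma_c$. Recognizing $\sum_u (\beta_{vu}^l)^2 = \|\check{\bm{\pi}}_{v,c}^l\|_2^2 / \|\check{\bm{\pi}}_{v,c}^l\|_1^2$ and applying the linear map $W_{1\sim l}$, which preserves Gaussianity, yields the stated mean $W_{1\sim l}\bm{\mu}_c$ and covariance, completing the derivation.

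The main obstacle I anticipate is the bookkeeping in the first step: verifying that the per-layer softmax-normalized oracle coefficients compose multiplicatively along paths into precisely the normalized same-class path counts, rather than some path-dependent product, and making the independence assumption on the features explicit so that the covariances add. This is the same technical point that underlies Lemma~\ref{lemma:gcn-reproductive}; the genuinely new content is simply that the oracle restricts the admissible paths from $P_{v\sim u}^l$ to $\check{P}_{v\sim u}^l$, which is why the cross-class mean term drops out and the normalizing denominator changes from $|P_v^l|$ to $\|\check{\bm{\pi}}_{v,c}^l\|_1$.
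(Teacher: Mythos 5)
Your proposal takes essentially the same route as the paper's proof: the oracle restricts aggregation to same-class neighbors, the reproductive property of the normal distribution gives $\bm{a}_v^1 \sim \mathcal{N}\left(\bm{\mu}_c, \frac{1}{|N_v \cap V_c|}\Sigma_c\right)$ in the first layer, and the $l$-layer case follows by identifying the convolved contributions with the same-class path counts $|\check{P}_v^l| = \|\check{\bm{\pi}}_{v,c}^l\|_1$, exactly as in your path-expansion induction. The bookkeeping obstacle you flag --- whether the composed per-layer normalized coefficients are literally the normalized path counts rather than path-dependent products of reciprocal same-class neighborhood sizes --- is glossed over in the paper's proof as well (it simply asserts the count equals $\|\check{\bm{\pi}}_{v,c}^l\|_1$), so your more explicit unfolding is, if anything, a more careful rendering of the same argument.
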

\begin{proof}
The linear sum $a\bm{q}+b\bm{q}'$ of random variables $\bm{q}$ and $\bm{q}'$ that follows the identical distribution $\mathcal{N}(\bm{\mu},\Sigma)$ also follows $\mathcal{N}((a+b)\bm{\mu}, (a^2+b^2)\Sigma)$. Therefore, because Eq.~(\ref{agg}) aggregates $|N_v \cap V_c|$ features that follow an identical distribution in the first layer of GAT, in which the oracle controls attention coefficients, $\bm{a}_v^1$ follows $\mathcal{N}(\bm{\mu}_c, \frac{1}{|N_v \cap V_c|}\Sigma_c)$. Because the linear transformation of random variables that follows the normal distribution generates random variables that follow another normal distribution, $\bm{h}_v^1$ follows $ \mathcal{N}(W^1\bm{\mu}_c, \frac{1}{|N_v \cap V_c|}{W^1}^T\Sigma_c W^1)$. In the $l$-layer GAT with the oracle, the number of features convolved to the representation of $v$ is consistent with the number of paths $|\check{P}_v^l|=|\bigcup_{u \in V} \check{P}_{v \sim u}^l|=\| \check{\bm{\pi}}_{v,c}^l \|_1$. Therefore, the distribution that $\bm{h}^l_v $ follows is expressed as Eq. (\ref{eq:corollary1}). $\Box$
\end{proof}

The mean of the distribution followed by representations of nodes belonging to the class $c$ never depends on the mean $\bm{\mu}_{\overline{c}}$ for the other class $\overline{c}$. When $W^l$ is orthogonal, the variance of Eq.~(\ref{eq:corollary1}) becomes $ \frac{\|\check{\bm{\pi}}_{v,c}^l\|_2^2}{\|\check{\bm{\pi}}_{v,c}^l\|_1^2}\Sigma_c$.
$0<\frac{\|\check{\bm{\pi}}_{v,c}^l\|_2^2}{\|\check{\bm{\pi}}_{v,c}^l\|_1^2} \le 1$ holds, and the variance decreases with an increase of $l$. Because representations $\bm{h}_v^l$ of nodes $v$ belonging to the class $c$ follow a different distribution from the distribution that $\bm{h}_u^l$ of nodes $u$ belonging to $\overline{c}$ follow, and as the variance in the distributions decreases with an increase of $l$, increasing $L$ in turn increases the likelihood that the GAT with the oracle can classify correctly.

Fig.~\ref{variance0} shows the distribution of the input feature vectors $\bm{x}_v=\bm{h}_v^0$. Two bell-shaped distributions are class 0 and class~1, and the gray areas are the overlap of those distributions. The overlap is related to the Bayes error rate.
The Bayes error rate is the lowest possible error rate for any classifier, representing the limit of classification accuracy given the inherent noise in the data. It is the probability that a randomly chosen data point is misclassified by an optimal classifier that knows the true underlying distributions.
Under the presence of oracles, the variances of the distributions of $\bm{h}_v^l$ are reduced by convolving $l$ times, and the overlap of the distributions is correspondingly reduced. Therefore, at each layer, if we can correctly predict the class to which each node belongs, we can expect to reduce the over-smoothing.

\begin{figure}[b]
	\begin{center}
		\includegraphics[width=85mm]{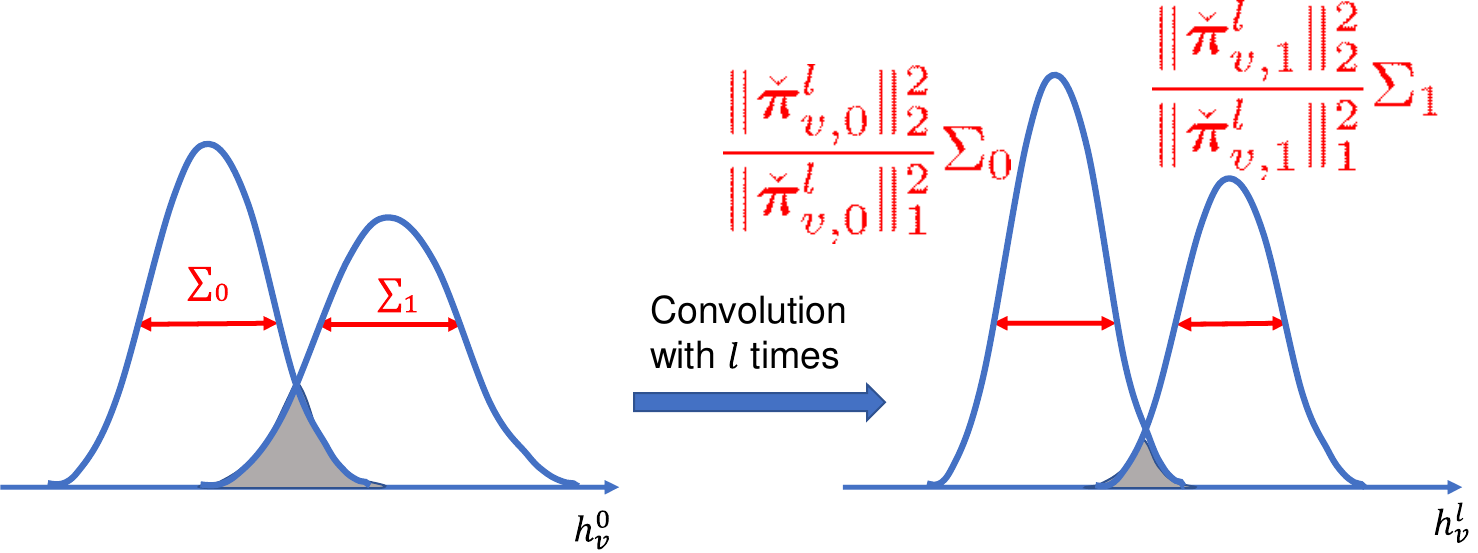}
 \vspace{-2mm}
	\caption{The overlaps of two distributions.}
        \label{variance0}
	\end{center}
\end{figure}

\subsection{Implementation of the Oracle in DeepGAT}

\begin{algorithm}[t]
\DontPrintSemicolon
\SetKwInOut{Input}{input}\SetKwInOut{Output}{output}
\Input{ $G=(V,E,X)$, $Y\in \mathbb{R}^{n \times |C|}$, and $L$ }
\Output{Predictions of class labels}
\nl $(\bm{h}_1^0,\bm{h}_2^0,\ldots,\bm{h}_n^0) \leftarrow X$ \;
\nl	\For{$l \in \{ 1,2,\ldots,L \}$}{
\nl     $Z\leftarrow  rm\_diag(\hat{A}^{l-1})Y$\;
\nl     $(\bm{z}_1,\bm{z}_2,\ldots,\bm{z}_n) \leftarrow Z^T$ \;
\nl		\For{ $v \in V$}{
\nl         $\bm{p}_v^{l-1} \leftarrow W^l_1 (\bm{h}_v^{l-1} || \bm{z}_v)$\; 
\nl			$\hat{\bm{y}}_v^{l-1} \leftarrow softmax(\bm{p}_v^{l-1})$ \;
		}
\nl		\For{$(v,u) \in E$}{
\nl			$\overline{\alpha}_{vu}^l \leftarrow att(\hat{\bm{y}}_v^{l-1}, \hat{\bm{y}}_u^{l-1}) $ \;
\nl			$\overline{\alpha}_{uv}^l \leftarrow \overline{\alpha}_{vu}^l$\;
		}
\nl		\For{ $v \in V$}{
\nl			\For{ $u \in N_v$}{
\nl				$\alpha_{vu}^l \leftarrow \frac{\exp(\overline{\alpha}_{vu}^l)}{\sum_{u\in N_v }\exp(\overline{\alpha}_{vu}^l)}$ 
			}
\nl				$\bm{a}_v^l \leftarrow \sum_{u \in N_v}\alpha_{vu} \bm{h}_u^{l-1}$ \;
\nl				$\bm{h}_v^l \leftarrow  \sigma(W_2^l\bm{a}_v^l)  $\;
		}
	}
\nl	\For{ $v \in V$}{
\nl		$\hat{\bm{y}}_v^{L} \leftarrow softmax(\bm{h}_v^{L})$
}		
\nl \Return $\{\hat{\bm{y}}_v^{l} \mid (v,l) \in V \times \{0,1,\ldots,L\}\}$
\caption{Feedforward procedures of DeepGAT}
\end{algorithm}

\begin{algorithm}[t]
\DontPrintSemicolon
\SetKwInOut{Input}{input}\SetKwInOut{Output}{output}
\Input{ $G=(V,E,X)$ and $L$ }
\Output{Predictions of class labels}
\nl $(\bm{h}_1^0,\bm{h}_2^0,\ldots,\bm{h}_n^0) \leftarrow X$ \;
\nl	\For{$l \in \{ 1,2,\ldots,L \}$}{
\nl		\For{$(v,u) \in E$}{
\nl			$\overline{\alpha}_{vu}^l \leftarrow att(W^l\bm{h}_v^{l-1}, W^l\bm{h}_u^{l-1}) $ \;
\nl			$\overline{\alpha}_{uv}^l \leftarrow \overline{\alpha}_{vu}^l$
		}
\nl		\For{ $v \in V$}{
\nl			\For{ $u \in N_v$}{
\nl				$\alpha_{vu}^l \leftarrow \frac{\exp(\overline{\alpha}_{vu}^l)}{\sum_{u\in N_v }\exp(\overline{\alpha}_{vu}^l)}$ 
			}
\nl				$\bm{a}_v^l \leftarrow \sum_{u \in N_v}\alpha_{vu} \bm{h}_u^{l-1}$ \;
\nl				$\bm{h}_v^l \leftarrow  \sigma(W^l\bm{a}_v^l)  $
	}
}
\nl	\For{ $v \in V$}{
\nl		 $\hat{\bm{y}}_v^L \leftarrow softmax(W\bm{h}_v^{L})$
	}
\nl	\Return $\{\hat{\bm{y}}_v^L \mid v \in V \}$
\caption{Feedforward procedures of GAT}
\end{algorithm}

This subsection discusses how to implement the oracle. Because a GAT with a few layers can predict class labels with reasonably high accuracy, the proposed DeepGAT method predicts the class labels of nodes in each layer. 
To achieve this, DeepGAT linearly transforms each  representation in each layer into a $|C|$-dimensional representation $\bm{p}_v^l$, 
and then computes $\hat{\bm{y}}_v^l=(\hat{y}_{v,0}^l,\hat{y}_{v,1}^l)^T \in \mathbb{R}^{|C|}=\mathbb{R}^2$ from the representations $\bm{p}_v^l$ in each layer, where $\hat{y}_{v,c}^l$ is the probability of $v$ belonging to class $c$ predicted in the $l$-th layer.
If the prediction $\hat{\bm{y}}_v^l$ is completely correct, then $\langle \hat{\bm{y}}_v^l, \hat{\bm{y}}_u^l \rangle $ coincides with $o(v,u)$. Therefore, let $\overline{\alpha}_{vu}^l$ be $\langle \hat{\bm{y}}_v^l, \hat{\bm{y}}_u^l \rangle $.

Algorithm~1 shows the pseudo code of the feedforward procedures of our DeepGAT.
For comparison, the pseudo-code for the conventional GAT is shown in Algorithm~2.
The correct convolution of DeepGAT depends on the accuracy of the prediction  $\hat{\bm{y}}_v^l$ at each layer.
Various previous researches have demonstrated that incorporating class label information  $Y\in \mathbb{R}^{n \times |C| }$ as additional input can improve GNN performance.
To facilitate the accuracy of this prediction,  we combine DeepGAT with this label propagation~\cite{SHeGNN, LabelAgg1, LabelAgg2,LabelAgg3}.
Here,  $Y$ is the raw label matrix whose rows corresponding to nodes $V'$ with graph-truth labels are one-hot vectors and the other rows for nodes  $V\setminus V'$ are filled with zeros.
$\hat{A}$ in Line~3 of Algorithm~1 is the row-normalized form of adjacency matrix of $G$, 
$\hat{A}^l$ is the $l$-th power of $\hat{A}$, and the function $rm\_diag$ sets the diagonal elements of the matrix to zero in order to avoid the label leakage~\cite{SHeGNN}.
In Line~6, the concatenation of the representation $\bm{h}_v^{l-1}$ and label aggregation $\bm{z}_v$ is linearly transformed with the learnable parameter $W_1^l$, and then 
the class labels for $v$ are predicted in Line~7. 
In Lines~9 and 10, attention coefficients are computed from the predicted class labels. 
The convolutions using the attention coefficients are repeated for $L$ layers, and class labels $\hat{\bm{y}}_v^L$ are predicted in the final layer.

DeepGAT learns $W_1^l$ and $W_2^l$ from graph-truth labels.
According to Subsection~\ref{subsec:4A}, if DeepGAT can correctly predict labels of nodes in each layer, the variance of representations in the final layer will be smaller.
Since prediction errors in the layers close to the input layer propagate toward the output layer, the correctness of predictions in the layers close to the output layer depends on the correctness of predictions in the layers close to the input layer. Therefore, in order to suppress prediction errors in the layers close to the input layer, we use the following loss function.
\begin{eqnarray}
&&-\sum_{l=1}^L\gamma(l)\underline{\sum_{v \in V'}[ y_v\log \hat{y}_{v,0}^l+(1-y_v)\log (1-\hat{y}_{v,1}) ] \label{newloss}} \\
&&\mbox{where } \gamma(l)=\frac{\delta}{l+\delta}+1 \nonumber 
\end{eqnarray}
The underlined part of Eq.~(\ref{newloss}) is the cross-entropy loss function for the $l$-th layer, and it is multiplied by $\gamma(l)$ which decreases monotonically with increasing the hyperparameter $\delta>0$ and converges to 1. By using the loss function~(\ref{newloss}), 
the proposed DeepGAT method aims to both
\begin{itemize}
\item minimize 
prediction error by minimizing Eq.~(\ref{newloss}) and 
\item reduce the variance of the distributions by Eq.~(\ref{eq:corollary1}).
\end{itemize}

\section{Experimental Evaluation}
\subsection{Experimental Setting}

\begin{table*}[t]
	\caption{Summary of benchmark datasets.}
	\begin{center}
		\begin{tabular}{|l|r|r|r|r|r|r|r|r|r|r|r|} \hline
			 Datasets  & \# of graphs & $|V|$   &  $|E|$    &  $d$  & $|C|$ & avg. & max.  & hub node  & diameter & density & avg. cluster \\
                &&    &    &   &  & degree &  degree & rate [\%] &  & $\frac{2|E|}{|V|(|V|-1)}$ & coefficient $cce$ \\ \hline
                Cora    &1& 2,708   & 10,556    & 1,433 & 7     &9.8    &338    &2.18\%	&19	&0.18\%	&24.07\%
 \\ \hline
                CiteSeer&1& 3,327	  & 9,104	  & 3,703 &	6     &7.5	&200	&1.23\%	&28	&0.11\%	&14.15\%
 \\ \hline
			CS      &1& 18,333  & 163,788   & 6,805 & 15    &19.9	&274	&18.04\%	&24	&0.05\%	&34.25\% \\ \hline
                Physics &1& 34,493  & 495,924   & 8,415 & 3     &	30.8	&766	&36.40\%	&17	&0.04\%	&37.76\% \\  \hline
			Flickr  &1& 89,250  & 899,756   & 500   & 7     &22.2	&10,852	&11.57\%	&8	&0.01\%	&3.30\% \\ \hline
			PPI     &24& 56,944  & 818,716   & 50    & 121  &28.3    &721    & 27.43\%&--&--& 27.06\% \\ \hline
 		\end{tabular}
	\end{center}
	\label{zikken1}
\end{table*}

\begin{figure}[b]
	\begin{center}
		\includegraphics[width=80mm]{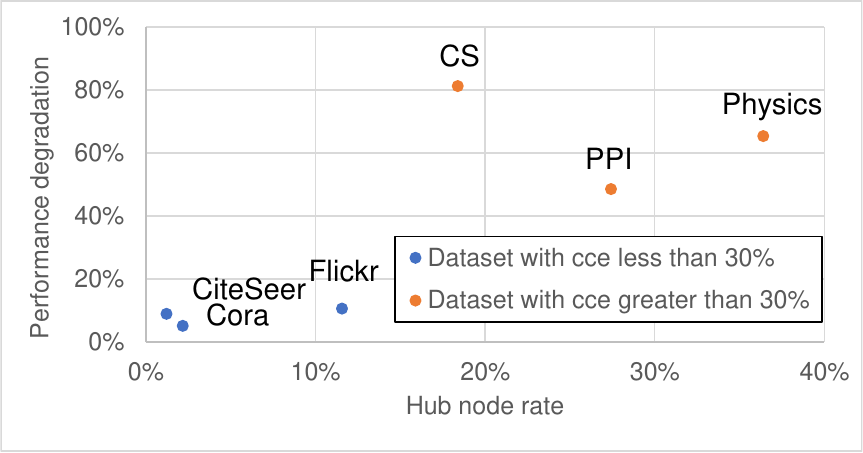}
	\end{center}
	\vspace{-5mm}
	\caption{Performance degradation of GAT.}
	\label{degradation}
\end{figure}

This section demonstrates that DeepGAT prevents over-smoothing by comparing DeepGAT with the original GAT. 
In the experiments in this paper, the proposed method is not compared with state-of-the-art GNNs, but only with GAT.
By comparing only with GAT, we verify whether predicting which class each node belongs to at each layer and convolving representations of nodes based on the prediction is effective. If we find that it is effective, we can use our ideas in various methods that use GAT.
Comparisons with the world's best GNNs and GNNs that have achieved depths of over 100 layers are included in future plans.
We implemented DeepGAT with Pytorch and Pytorch Geometric. 
See config.yaml at \url{https://github.com/JNKT215/DeepGAT_CANDAR} for the hyperparameter settings for each model. 

Table~\ref{zikken1} summarizes the benchmark datasets used in our experimental evaluation. Cora, CiteSeer, CS, and Physics are datasets for article citation networks, PPI is a network for protein-protein interactions, 
and Flickr is a network of images. 
We treated those networks as undirected graphs. PPI is a dataset for the multi-label node classification, and the other datasets are used for multi-class node classification. So, the label propagation in Lines~3 and 4 of Algorithm~1 was not used for PPI.
We treated nodes with degree greater than 30 as hub nodes, and the hub node rate was set to $\frac{|\{v\mid v\in V, |N_v|> 30\}|}{|V|}$.
Figure~\ref{degradation} shows the performance degradation of the GAT which is defined as the difference between the best micro-F1 score for various $L$ on each dataset and a micro-F1 score at the maximum $L$ that could be computed in an acceptable time.
The performance degradation of GAT with respect to CS and Physics is significant, while one with respect to Cora and CiteSeer is not significant. The former datasets have the hub node rates, and many of them also have higher cluster coefficients $cce$.
When there are more hub nodes in the graph, the average distance between any two nodes is smaller, so the number of layers for a feature at one node to propagate as a representation of another node is smaller. This makes over-smoothing more likely to occur.
In the next subsection, we report whether the proposed method can mitigate over-smoothing for CS, Physics, Flickr, and PPI whose hub node rates are more than 10\%.

\subsection{Experimental Results}
\subsubsection{Micro-F1 Scores with Various Numbers of Layers}
First, we confirmed that DeepGAT prevents over-smoothing when the number of layers is increased. 
Figure~\ref{acc_fig} shows the micro-F1 scores for DeepGAT and GAT with various numbers of layers and the CS, Physics, and Flickr datasets.
DP and SD indicate the dot-product attention and scaled dot-product attention, respectively.
To suppress prediction errors and reduce computation time for layers close to the input layer, label propagation in Lines 3 and 4 of Algorithm~1 was limited to the first through the third layers.

When $L$ is increased, the micro-F1 scores of GAT decrease significantly due to over-smoothing. 
In contrast, the DeepGAT with 15 layers for CS and Physics and one with 9 layers for Flickr maintain a relatively high micro-F1 score compared with the DeepGAT with 2 layers,
which indicates that DeepGAT has the ability to prevent over-smoothing.
The reason why the scores slightly decrease is that the same amount of data was used to train $W^1_1,W^2_1,\ldots,W^L_1,W^1_2,W^2_2,\ldots,W^L_2$ for DeepGATs with 2 and 15 layers.
Despite the fact that the difference between Algorithms~1 and 2 is slight, the difference in their performance is very large. 
Table~\ref{acc_table1} shows the best micro-F1 score (denoted by ``{\it best}'') for various $L$ on each dataset and a micro-F1 score (denoted by ``{\it max.}'') at the maximum $L$ that could be computed in an acceptable time.
The numbers in parentheses in Table~\ref{acc_table1} represent the numbers of layers $L$ at the time the scores were obtained, and the numbers in bold are the highest scores for each dataset.
Table~\ref{acc_table1} shows that DeepGAT is superior to GAT and the use of DeepGAT allows for the construction of deeper networks than GAT.

\begin{figure}
\centering
{\includegraphics[width=3.0in]{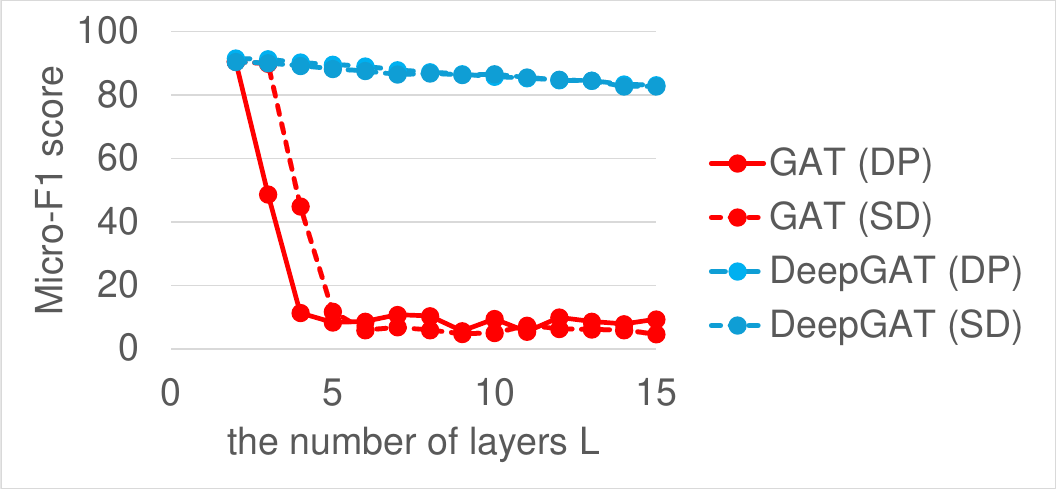}
\\(a) Micro-F1 scores for CS.
\vspace{2mm}
}
\\
{\includegraphics[width=3.0in]{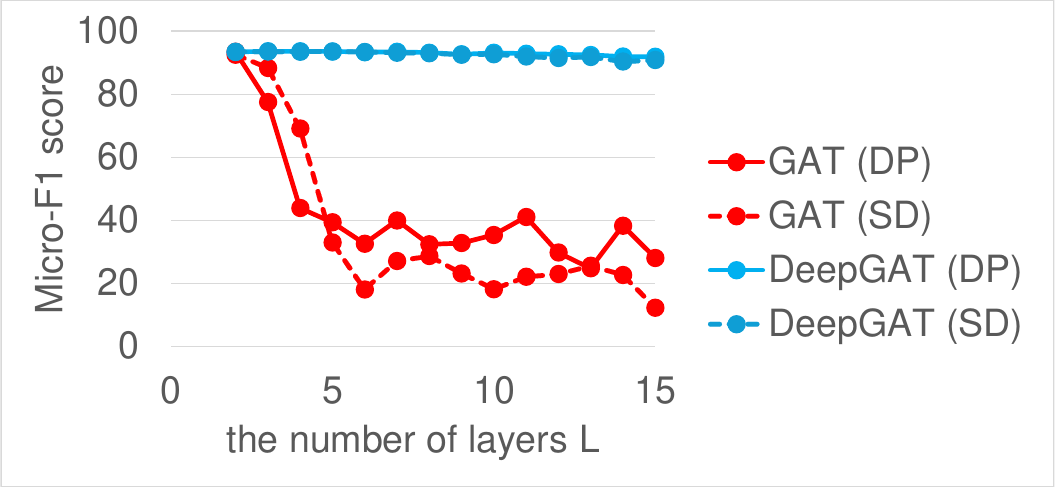}
\\(b) Micro-F1 scores for Physics.
\vspace{2mm}
}
\\
{\includegraphics[width=3.0in]{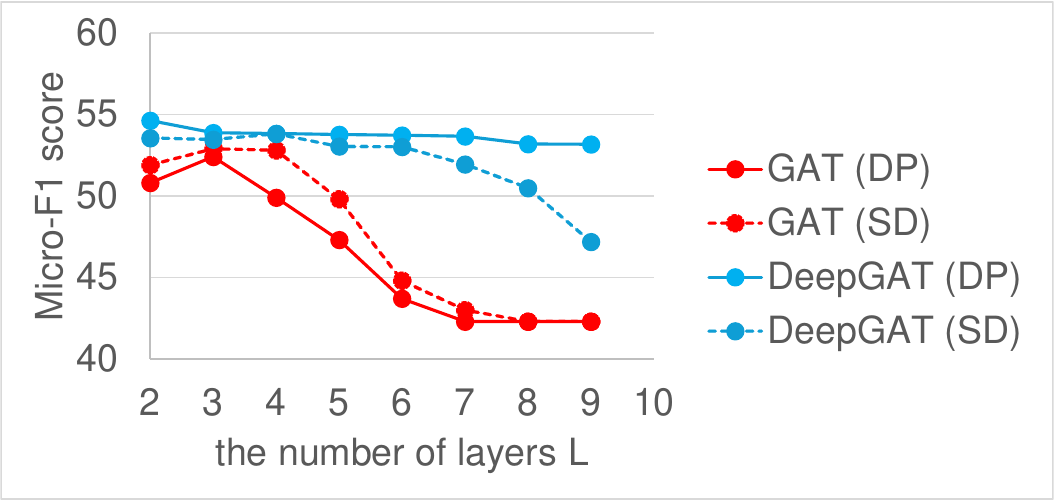}
\\(c) Micro-F1 scores for Flickr.
\vspace{2mm}
}
\\
{\includegraphics[width=3.0in]{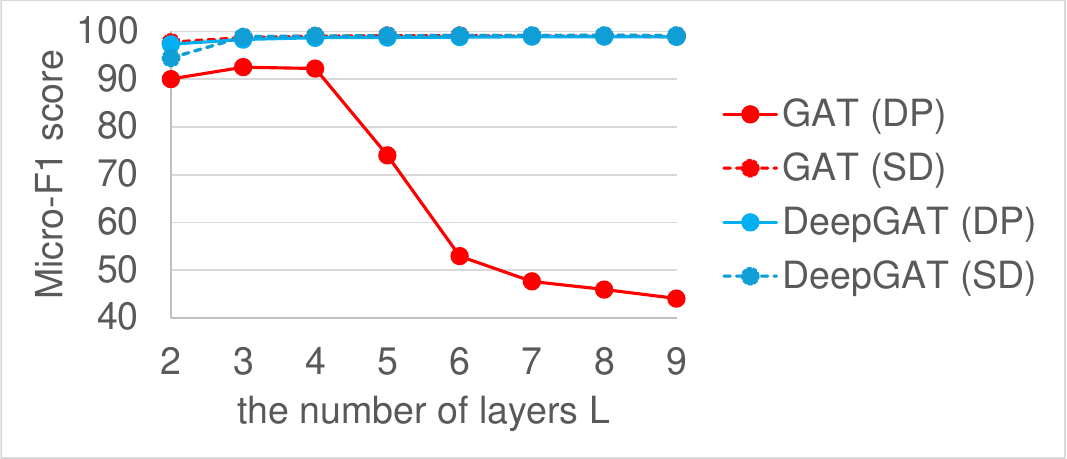}
\\(d) Micro-F1 scores for PPI.
}
\vspace{2mm}
\caption{Micro-F1 scores with various numbers of layers $L$ for the CS, Physics, Flickr, and PPI datasets.}
\label{acc_fig}
\end{figure}

\begin{table}[tb]
	\caption{Best Micro-F1 scores and Micro-F1 scores at the maximum $L$.}
	\vspace{-5mm}
	\begin{center}
		\begin{tabular}{|c|c|c|c|c|c|} \hline
			Model & &CS & Physics & Flickr & PPI  \\ \hline
			DeepGAT  & {\it best} &\textbf{91.5} (2) & \textbf{93.6} (4)  & \textbf{54.6} (2) & 98.9 (7) \\  \cline{2-6}
   
		    (DP)     & \cellcolor[gray]{0.8}{\it max.} &\cellcolor[gray]{0.8} 83.1 (15) & \cellcolor[gray]{0.8}91.8 (15)  & \cellcolor[gray]{0.8}53.2 (9) \cellcolor[gray]{0.8}&\cellcolor[gray]{0.8} 98.8 (9) \\ \hline
      
                DeepGAT  & {\it best} &90.4 (2) & \textbf{93.6} (5)  & 53.8 (4) & \textbf{99.2} (8,9) \\  \cline{2-6}
                
		        (SD)   &\cellcolor[gray]{0.8} {\it max.} &\cellcolor[gray]{0.8} 82.7 (15) & \cellcolor[gray]{0.8}90.8 (15)  &\cellcolor[gray]{0.8} 47.2 (9) & \cellcolor[gray]{0.8}\textbf{99.2} (9) \\ \hline
          
			GAT      & {\it best} &90.5 (2)  & 93.4 (2) & 52.4 (3) & 92.5 (3)  \\ \cline{2-6}
   
		      (DP)    & \cellcolor[gray]{0.8}{\it max.}   & \cellcolor[gray]{0.8}9.3 (15)  & \cellcolor[gray]{0.8}28.1 (15) &\cellcolor[gray]{0.8} 42.3 (9) &\cellcolor[gray]{0.8} 44.0 (9)  \\ \hline
        
                GAT     & {\it best} &90.6 (2)  & 92.5 (2) & 52.9 (3) & 99.1 (6)  \\ \cline{2-6}
                
		      (SD)    & \cellcolor[gray]{0.8}{\it max.} &\cellcolor[gray]{0.8} 4.6 (15)  &\cellcolor[gray]{0.8} 12.3 (15) &\cellcolor[gray]{0.8} 42.3 (9) & \cellcolor[gray]{0.8}98.9 (9)  \\ \hline
 		\end{tabular}
	\end{center}
	\label{acc_table1}
\end{table}

\begin{figure*}
    \begin{tabular}{ccc}
        \begin{minipage}{.25\textwidth}
            \centering
            \includegraphics[width=1\linewidth]{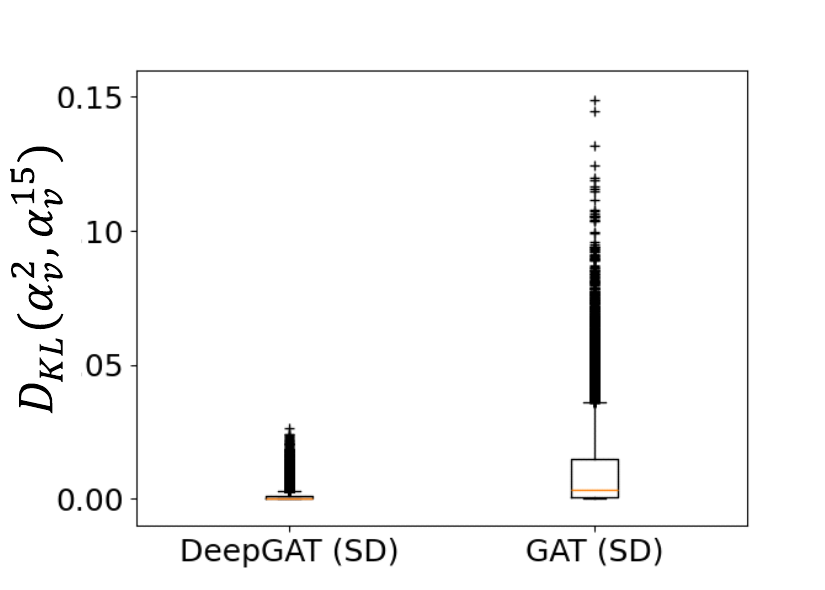}
            \\(a) CS
        \end{minipage}
        \begin{minipage}{.25\textwidth}
            \centering
            \includegraphics[width=1\linewidth]{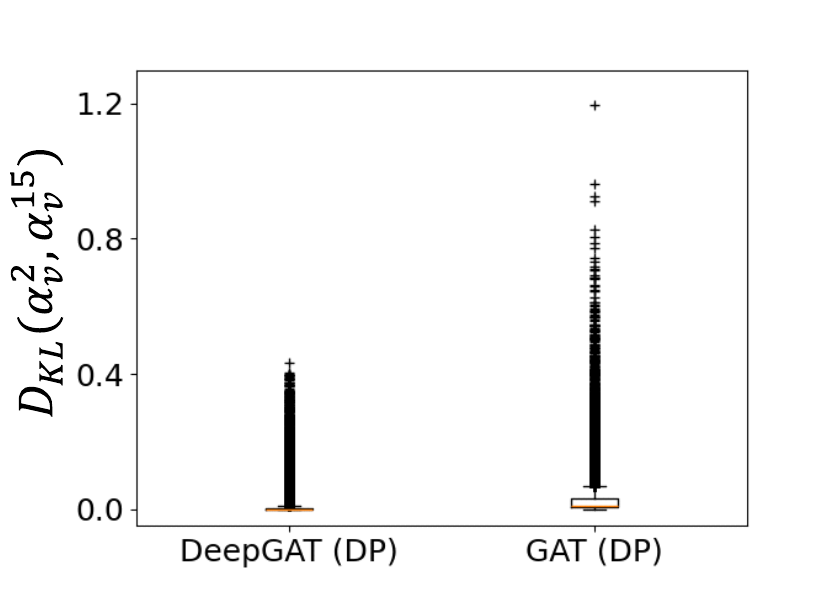}
            \\(b) Physics
        \end{minipage}
        \begin{minipage}{.25\textwidth}
            \centering
            \includegraphics[width=1\linewidth]{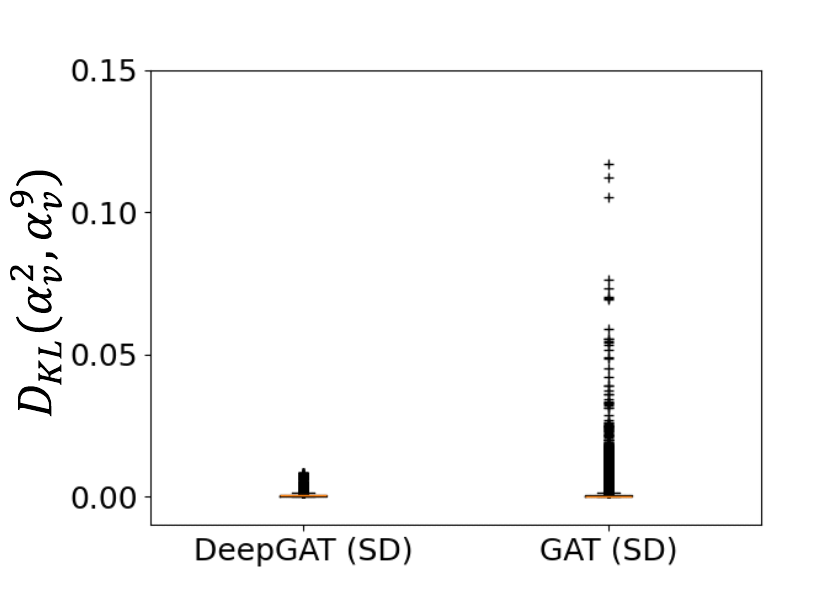}
            \\(c) Flickr
        \end{minipage}
        \begin{minipage}{.25\textwidth}
            \centering
            \includegraphics[width=1\linewidth]{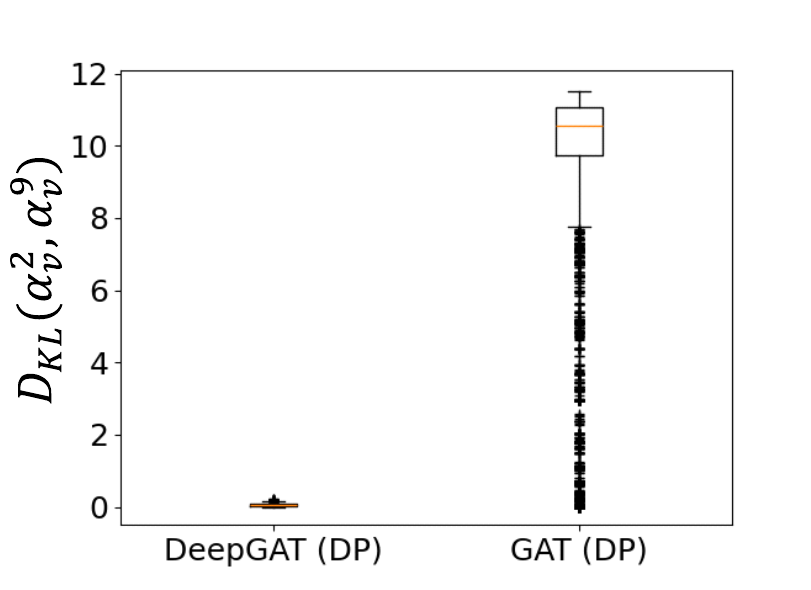}
            \\(d) PPI
        \end{minipage}
    \end{tabular}
\caption{Comparison between $D_{KL}(\bm{\alpha}_{v}^{2} || \bm{\alpha}_{v}^{L_{max}})$.}
	\label{attention_fig}
\end{figure*}

\subsubsection{Comparison between Attention Coefficients at 2 Layers}
Next, we confirmed that DeepGAT with a large number of layers can acquire appropriate attention coefficients as well as DeepGAT with a small number of layers.
In concrete terms, we compute the KL divergence between attention coefficients $\bm{\alpha}_v^2=(\alpha_{v1}^2,\alpha_{v2}^2,\ldots,\alpha_{v|V|}^2)^T$ for DeepGAT with 2 layers and $\bm{\alpha}_v^{L_{max}}$ for DeepGAT with $L_{max}$ layers.
If the impact of over-smoothing on DeepGAT is small, $\overline{\alpha}_{vu}^l$ should be almost the same as $o(v,u)$ for DeepGATs with both large and small numbers of layers.
Therefore, by measuring the difference between $\bm{\alpha}_v^2$ and $\bm{\alpha}_v^{L_{max}}$ by Eq.~(\ref{Divergence}), we can determine how accurately the features of the nodes adjacent to $v$ that belong to the same class as $v$ are convolved to the representation of $v$ in DeepGAT.
\begin{equation}
    D_{KL}(\bm{\alpha}_{v}^{L_1} || \bm{\alpha}_{v}^{L_2}) = \sum_{u\in N_v, \alpha_{vu}^{L_2} \ne 0}\alpha_{vu}^{L_1}\log\frac{\alpha_{vu}^{L_1}}{\alpha_{vu}^{L_2}} 
    \label{Divergence}
\end{equation}
Figure~\ref{attention_fig} shows a box-and-whisker diagram of $D_{KL}(\bm{\alpha}_{v}^{2} || \bm{\alpha}_{v}^{L_{max}})$ computed for various nodes $v\in V$.
The figure shows that the variance of $D_{KL}(\bm{\alpha}_{v}^{2} || \bm{\alpha}_{v}^{L_{max}})$ for DeepGAT is smaller than that for GAT.
Thus, DeepGAT with a large number of layers is trained to acquire similar attention coefficients to DeepGAT with a small number of layers.
These results show that for each node $v$, DeepGAT with larger layers convolves only a small number of features of $v$'s neighborhood belonging to a different class than $v$ into the representation of $v$ compared with the original~GAT.

\subsubsection{Error Rate $e_N$ using the Nearest Neighbor Method}
\begin{figure}
\centering
{\includegraphics[width=3.0in]{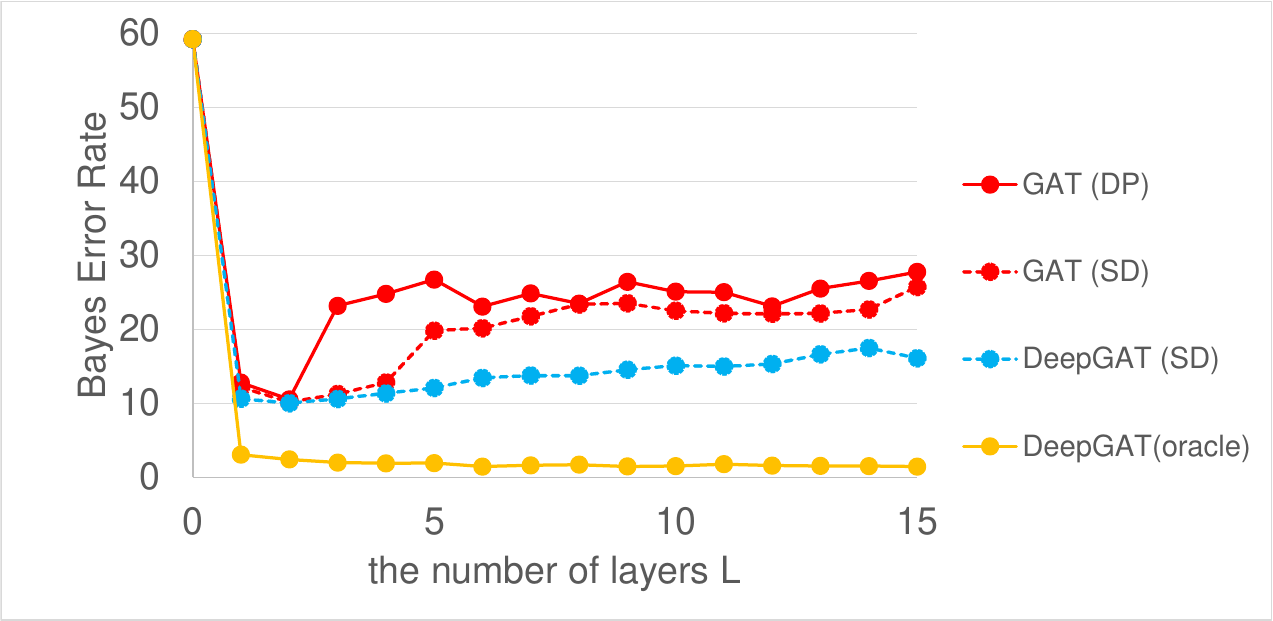}
\\(a) CS
\vspace{2mm}
}
\\
{\includegraphics[width=3.0in]{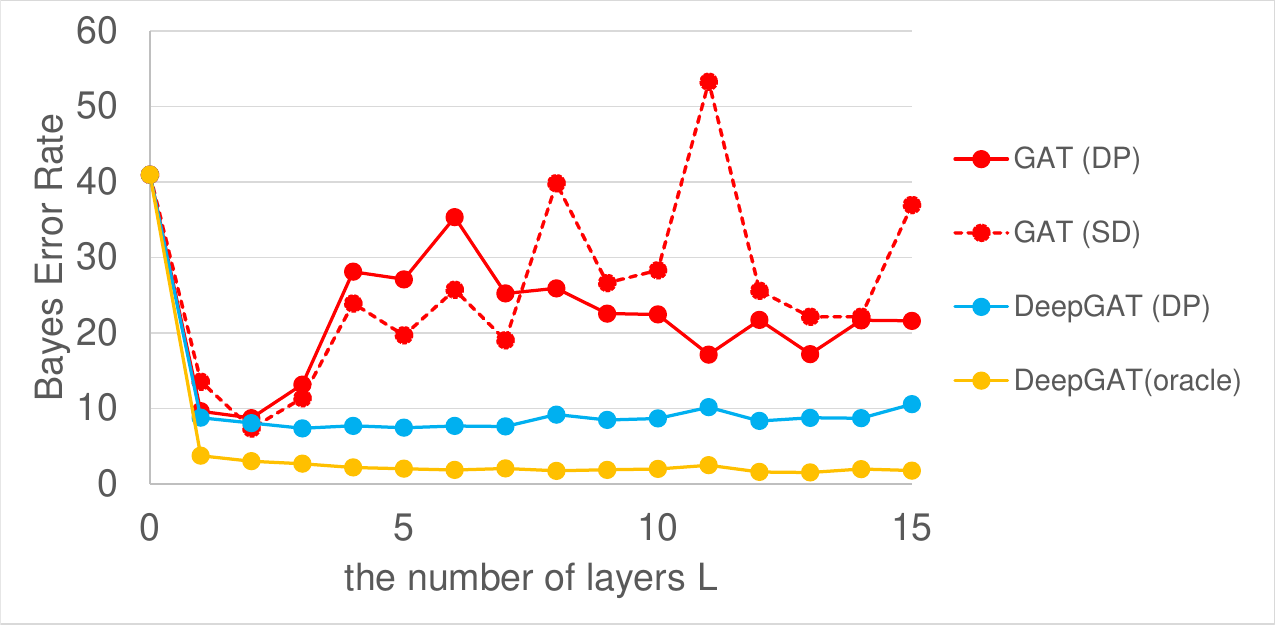}
\\(b) Physics
\vspace{2mm}
}
\\
{\includegraphics[width=3.0in]{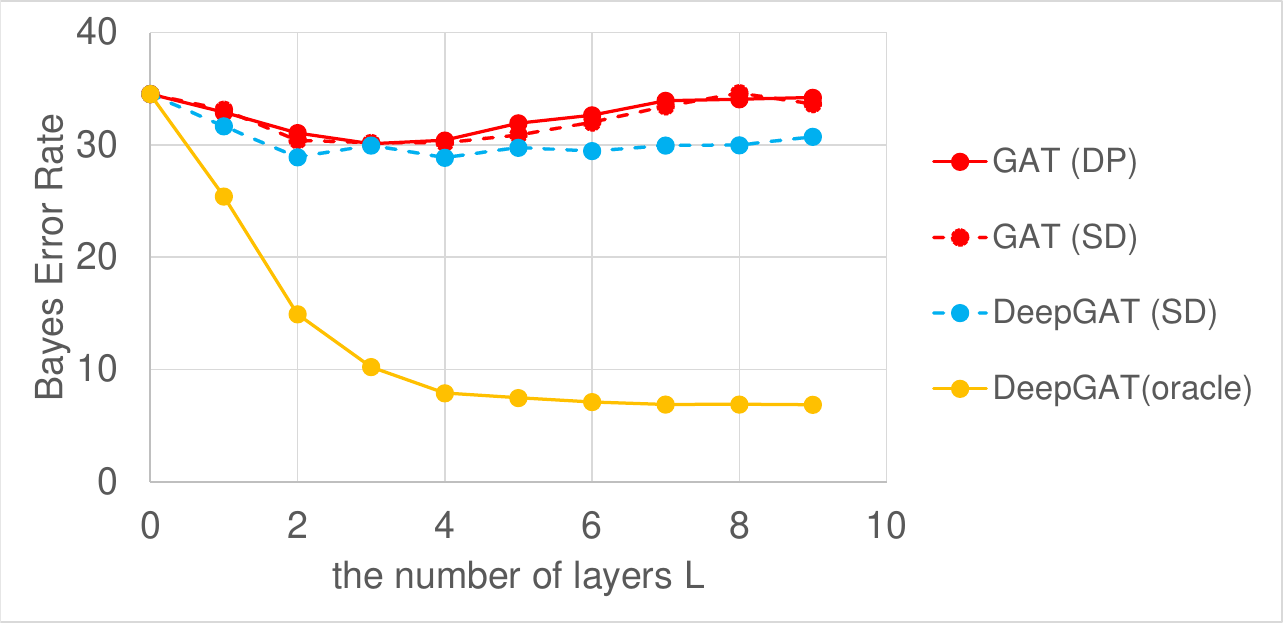}
\\(c) Flickr
\vspace{2mm}
}
\caption{Error rate $e_N$ using the nearest neighbor method. }
	\label{variance_fig}
\end{figure}

In the discussion immediately following the proof of Lemma~\ref{corollary1}, we noted that in the presence of an oracle, the variance of the distribution that the representation $\bm{h}_v^l$ follows decreases as $l$ increases.
The smaller the overlap between the distribution $\mathcal{D}_0$ followed by the representation $\bm{h}_v^L$ of node $v$ belonging to class 0 and the distribution $\mathcal{D}_1$ followed by the representation $\bm{h}_u^L$ of node $u$ belonging to class 1, the more accurate we can expect the classification to be.
The overlap is called the Bayes error rate and it is the lowest possible error rate for any classifiers for the given dataset.
A low rate suggests that a useful representation is obtained for classification by learning the representation.
In practice, however, it is difficult to know the exact probability distribution $\mathcal{D}_c$ from $\bm{h}_v^L$.
Thus, we estimate the upper and lower bounds of the Bayes error rate $e_B$ with an error rate $e_N$ by the nearest neighbor method with a large number of prototypes in the low-dimensional space. 
According to previous studies \cite{Thomas1967,Fukunaga1987},
\[e_B \le e_N \le e_B(2-\pi e_B) \le 2e_B, \]
where $\pi=\frac{|C|}{|C|-1}$. Therefore, the upper and lower bounds of the Bayes error rate $e_B$ are given by
\[\frac{1}{2}e_N \le \frac{1-\sqrt{1-\pi e_N}}{\pi} \le e_B \le e_N\]
for $0 \le e_N \le 0.5$.
Figure~\ref{variance_fig} shows $e_N$ for the CS, Physics and Flickr datasets when the number of layers $L$ was increased. Figure~\ref{variance_fig} does not include $e_N$ for PPI, because PPI is the dataset for the multi-label node classification. 
In Fig.~\ref{variance_fig}, the yellow broken line is obtained by replacing $att(\hat{\bm{y}}_v^{l-1}, \hat{\bm{y}}_u^{l-1})$ in Line~5 of Algorithm~1 into the oracle $o(v,u)$.
The error rate represented by the line gradually decreases as $L$ increases, which demonstrates that the variance of the distribution $\mathcal{D}_c$ for each class $c$ decreases as discussed after the proof of Lemma~\ref{corollary1}.
In contrast, when $\overline{\alpha}_{vu}$ is computed from representations but not the oracle, $e_N$ increases slightly with an increase of $L$ from 1. 
This is because $\overline{\alpha}_{vu}$ computed from the learned representation does not exactly match $o(v,u)$. 
However, $e_N$ for DeepGAT is always smaller than for GAT, which indicates that the overlap between the distributions $\mathcal{D}_0$ and $\mathcal{D}_1$ followed by representations $\bm{h}_v^L$ learned from DeepGAT is small.

\section{Conclusion}
GNNs tend to suffer from over-smoothing leading to performance degradation. 
Therefore, this paper introduced DeepGAT for predicting the class to which network nodes belong in a deep GAT.
It avoids over-smoothing in a GAT by ensuring that nodes in different classes are not similar at each layer. 
Using DeepGAT to predict class labels, a 15-layer network could be constructed without the need to tune the number of layers. 
DeepGAT prevented over-smoothing and achieved a 15-layer GAT with similar performance to a 2-layer GAT, as indicated by the similar attention coefficients.
DeepGAT enables the training of a large network to acquire similar attention coefficients to a network with few layers. It avoids the over-smoothing problem and obviates the need to tune the number of layers, thus saving time and enhancing GNN performance.


\end{document}